\documentclass[letterpaper,10pt,conference]{ieeeconf}
\IEEEoverridecommandlockouts
\usepackage{multirow}
\usepackage{lipsum}
\usepackage{amsmath}
\usepackage{amssymb}
\usepackage[ruled,vlined]{algorithm2e}
\usepackage{graphicx}
\usepackage{url}
\usepackage{cite}

\newtheorem{proposition}{Proposition}
\graphicspath{{./Figures/}}

\title{\LARGE \bf Directional Pullback Regularization for Long-Horizon Rollout of Latent World Models in Autonomous Driving}

\author{
	\parbox{\textwidth}{%
		\centering
		Shatadal Mishra, Ahmadreza Moradipari, Nejib Ammar
	}%
	\thanks{Toyota Motor North America R\&D - InfoTech Labs,
		{\tt\small \{shatadal.mishra, ahmadreza.moradipari, nejib.ammar\}@toyota.com,}}%
}

\begin{document}
	
	\maketitle
	\thispagestyle{empty}
	\pagestyle{empty}
	
\begin{abstract}
Learned world models for autonomous driving operate in unconstrained latent spaces, yet the physical systems they represent are governed by strict non-holonomic constraints: ground vehicles cannot realize instantaneous pure lateral motion. We study long-horizon latent rollout modeling for autonomous driving and ask whether rollout quality can be improved by assigning a motion-aware geometry to latent transitions, rather than by redesigning the predictor architecture. Starting from an unconstrained latent rollout baseline, we introduce \emph{Directional Pullback Regularization} (DPR), a training-time method that defines a kinematics-aware anisotropic Riemannian metric in decoded state space and pulls it back through the decoder Jacobian to the latent manifold. The resulting latent-space metric assigns higher cost to latent directions that decode to lateral vehicle motion, while preserving the original encoder--dynamics--decoder backbone. This yields a clean three-way comparison among an \emph{Unconstrained Latent Rollout} model (ULR), an \emph{Euclidean Pullback Regularization} ablation (EPR), and the full directional method (DPR). We evaluate all models on open-loop ego-motion rollout over a common 10-second test horizon, using training rollout lengths of 1, 3, 5, and 10 steps. Across all four training regimes, DPR achieves lower 10-second final displacement error than both ULR and EPR. The strongest relative accuracy gain appears for models trained with 3-step rollout supervision, where DPR reduces the 10-second final displacement error by approximately \(44\%\) relative to ULR. The strongest overall control-relevant tradeoff is obtained for longer rollout-trained models, especially at 10 steps, where DPR simultaneously improves displacement error, lateral-motion consistency, and forward-motion preservation relative to both ULR and EPR.

\end{abstract}

\section{Introduction}
Ground vehicles obey non-holonomic kinematic constraints: instantaneous motion must align with the vehicle heading, so a car cannot move by pure sideways slip. Standard latent world models impose no corresponding geometric constraint on decoded state increments, creating a mismatch in which latent transitions can decode to kinematically inconsistent motion. Over long roll-out horizons, this mismatch can accumulate even when the loss of one-step prediction is small.

We ask whether this mismatch can be corrected without changing the predictor architecture. To that end, we introduce \emph{Directional Pullback Regularization} (DPR), which defines a kinematics-aware anisotropic Riemannian metric in decoded state space and pulls it back through the decoder Jacobian to latent space. In the spirit of prior work on decoder-induced Riemannian geometry and pullback metrics for latent-variable models \cite{tosi2014metrics,arvanitidis2018latent,shao2018riemannian,arvanitidis2021geometrically}, DPR penalizes latent directions that decode to lateral vehicle motion while preserving the original encoder--dynamics--decoder backbone, thereby acting as a training-time geometric bias for long-horizon rollout consistency.

Building on a mature lineage of latent world models
\cite{mishra2026network,worldmodels,planet,dreamer,recurrent_world_models,dreamerv3,tdmpc2},
the dominant trend in driving world models has been toward richer
architectures, larger generative stacks, and more specialized scene
representations. Autonomous driving is a particularly compelling setting
for world models because safe decision-making depends on anticipating
the consequences of ego motion in complex, interactive, and partially
observed scenes. Recent surveys emphasize that world models are becoming
a central tool for simulation, forecasting, and planning in driving
\cite{wm_survey_ad}. Representative examples include TrafficBots for
data-driven traffic simulation and motion prediction \cite{trafficbots},
GAIA-1 and GAIA-2 for controllable generative driving world models
\cite{gaia1,gaia2}, Copilot4D for tokenized discrete-diffusion world
modeling \cite{copilot4d}, BEVWorld for multimodal bird's-eye-view
latent simulation \cite{bevworld}, and Drive-OccWorld for
action-conditioned four-dimensional occupancy forecasting and planning
\cite{driveoccworld}. Collectively, these works show that learned latent
representations can support increasingly realistic and controllable
autonomous-driving world models. However, they rely on heavyweight
video-generation pipelines, specialized scene representations, or
auxiliary components that make it difficult to isolate a simple geometric
mechanism for improving ego-motion rollout. As a result, they do not
directly answer a narrower control-oriented question: can long-horizon
latent rollout be improved by shaping the local geometry of latent
transitions themselves?

This question is especially important because long-horizon latent rollout
remains fragile. A model may reduce one-step prediction loss while still
drifting toward motion patterns that are difficult to reconcile with the
local geometry of feasible vehicle motion \cite{fazliani2025enhancing}. In driving, this issue is
particularly acute because local motion is anisotropic: forward
translation, heading evolution, and speed change are natural directions
of motion, whereas pure lateral displacement is kinematically forbidden
by the non-holonomic constraint. Recent efforts have therefore begun to
inject stronger physical or kinematic structure into learned world
models. Examples include physically informed driving world models,
physically interpretable world models, and kinematics-aware latent world
models for autonomous driving \cite{drivephysica,piwm,kaware}. These
works reinforce an important point: unconstrained latent prediction is
often insufficient in safety-critical dynamical domains.

However, existing physically grounded approaches often introduce
structure through auxiliary modules, architectural changes, additional
supervision, or specialized representations \cite{drivephysica,piwm,kaware,fazliani2026shapebench}.
In contrast, this paper asks a more targeted question: can one improve a
latent rollout model by regularizing the decoded tangent geometry induced
by the latent transition itself, without redesigning the predictor? This question is attractive from a control perspective because the local rollout tangent directly determines how a latent model
moves through decoded state space and how error accumulates during
open-loop prediction, which is central to planning and model-
based control. If that tangent is geometrically misaligned with feasible motion, long-horizon rollout can degrade even
when one-step fitting appears acceptable.

The proposed framework yields a clean three-way decomposition. The first
model is \emph{Unconstrained Latent Rollout} (ULR), which serves as the
baseline trained only with prediction loss. The second is \emph{Euclidean
Pullback Regularization} (EPR), obtained by removing directional
preference from the pullback metric; this ablation penalizes overall
decoded tangent magnitude but does not distinguish between feasible and
lateral motion. The third is the full \emph{Directional Pullback
Regularization} (DPR) model, which adds explicit directional cost to the
lateral decoded component. This decomposition separates three distinct
questions: whether pullback regularization improves over unconstrained
training, whether any gain can be explained by generic smoothing alone,
and whether an explicitly directional geometric penalty is necessary.

Our central hypothesis is that a pullback metric aligned with
vehicle-motion geometry can improve long-horizon rollout quality more
effectively than either unconstrained training or Euclidean pullback
regularization alone. To test this hypothesis, we evaluate ULR, EPR,
and DPR using long-horizon open-loop ego-motion rollout metrics that are
directly relevant to control-oriented world modeling, including
displacement error, lateral-motion error, curvature consistency, and
preservation of forward motion.

The contributions of this work are threefold:
\begin{itemize}
    \item To the best of our knowledge, this is the first work to study
    directional pullback regularization for long-horizon latent ego-motion
    rollout in autonomous driving, using an explicit projector-based motion
    metric that penalizes decoded lateral motion while preserving the
    original encoder--dynamics--decoder backbone.
    \item We provide a clean experimental decomposition among the
    unconstrained baseline (\emph{Unconstrained Latent Rollout}, ULR),
    the Euclidean pullback ablation (\emph{Euclidean Pullback
    Regularization}, EPR), and the full directional method
    (\emph{Directional Pullback Regularization}, DPR).
    \item We characterize the interaction between pullback strength
    $\beta$ and training rollout length $U$, showing that longer self-fed
    rollout training allows DPR to simultaneously improve displacement
    accuracy, lateral consistency, and preservation of forward motion.
\end{itemize}

Taken together, these contributions provide a concise and
control-relevant perspective on how latent world models can be shaped by
the geometry of feasible motion rather than by prediction loss alone.
\begin{figure*}[t]
  \centering
  \includegraphics[width=\textwidth]{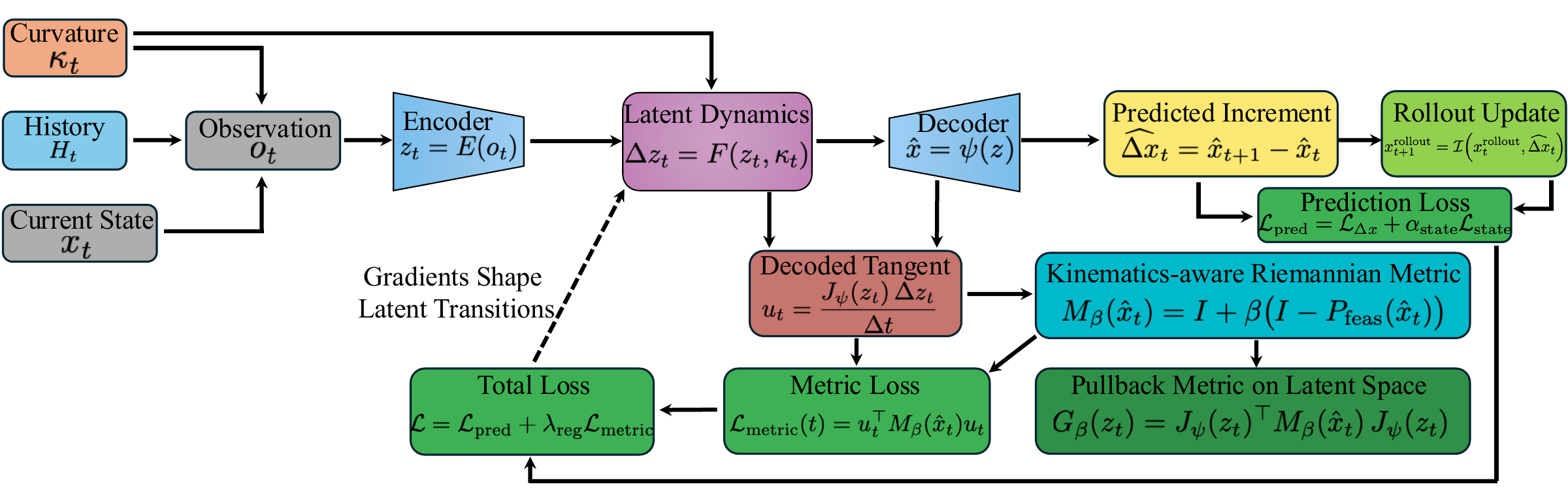}
  \caption{Method overview of the proposed directional pullback regularization (DPR) framework. The upper row shows the baseline open-loop rollout pipeline: the motion history, current state, and road curvature are assembled into the observation \(o_t\), encoded into the latent state \(z_t\), propagated by the latent dynamics model to produce \(\Delta z_t\), decoded back to state space, converted to the predicted increment \(\widehat{\Delta x}_t\), and integrated to obtain the next rollout state. The lower row shows the regularization path. The latent update and decoder Jacobian define the decoded tangent \(u_t = J_{\psi}(z_t)\Delta z_t/\Delta t\), which is evaluated under the kinematics-aware Riemannian metric \(M_\beta(\hat{x}_t)\). Pulling this metric back through the decoder Jacobian yields the latent-space metric \(G_\beta(z_t)\), and the resulting metric loss is combined with the prediction loss to form the total training objective. The dashed arrow indicates the backpropagated training signal through which the pullback regularizer shapes latent transitions.}
  \label{fig:methodology}
\end{figure*}
\section{Problem Setup and Method}
Figure~\ref{fig:methodology} summarizes the proposed training pipeline. The upper row shows the baseline latent rollout model, which predicts state increments in open loop from motion history, current state, and road curvature. The lower row shows the additional geometric regularization path introduced by the proposed method: the latent update induces a decoded tangent through the decoder Jacobian, this tangent is evaluated under a kinematics-aware decoded-space metric, and the resulting pullback metric loss is combined with the standard prediction loss during training.

\subsection{Problem Setup}

We study latent rollout modeling for ego-motion prediction in autonomous driving. To avoid overloading notation, we use \(\theta\) for vehicle heading and \(\psi(\cdot)\) for the decoder map. At each time step \(t\), the physical state is
\begin{equation}
x_t = [p_x,\; p_y,\; \theta,\; v]^\top \in \mathbb{R}^4,
\label{eq:physical_state}
\end{equation}
where \((p_x,p_y)\) denotes planar position, \(\theta\) is heading, and \(v\) is speed. We also define the state increment
\begin{equation}
\Delta x_t = [\Delta p_x,\; \Delta p_y,\; \Delta \theta,\; \Delta v]^\top \in \mathbb{R}^4.
\label{eq:delta_state}
\end{equation}
Given a history of the past \(K\) increments,
\begin{equation}
H_t = \{\Delta x_{t-K}, \ldots, \Delta x_{t-1}\},
\label{eq:history_state}
\end{equation}
the current state \(x_t\), and the local road-curvature input \(\kappa_t\), the goal is to predict the next increment \(\Delta x_t\) and to roll the model forward over multiple steps in open loop.

We consider both one-step and multi-step rollout training. In the multi-step case, the model is trained by self-feeding its own predicted increments for \(U\) steps, where \(U \in \{1,3,5,10\}\) in our experiments. All models are ultimately evaluated using long-horizon open-loop rollout over a fixed 10-second horizon.

\subsection{Unconstrained Latent Rollout (ULR)}

Our baseline model is the \emph{Unconstrained Latent Rollout} model (ULR). It consists of an encoder, a latent transition model, and a decoder. We first construct the normalized model input
\begin{equation}
o_t = \big[\mathrm{norm}(H_t),\; \mathrm{norm}(x_t),\; \mathrm{norm}(\kappa_t)\big].
\label{eq:norm_obs}
\end{equation}
The latent state and latent increment are computed as
\begin{equation}
z_t = E(o_t), \qquad
\Delta z_t = F([z_t,\kappa_t]),
\label{eq:z_deltaz}
\end{equation}
and the next latent state is updated additively:
\begin{equation}
z_{t+1} = z_t + \Delta z_t.
\label{eq:z_tp1}
\end{equation}
The decoder \(\psi(\cdot)\) maps latent states back to the physical state space:
\begin{equation}
\hat{x}_t = \psi(z_t), \qquad \hat{x}_{t+1} = \psi(z_{t+1}).
\label{eq:decoder_map}
\end{equation}
We define the predicted state increment by decoder differencing:
\begin{equation}
\widehat{\Delta x}_t = \hat{x}_{t+1} - \hat{x}_t.
\label{eq:predicted_state}
\end{equation}
Open-loop rollout then proceeds by integrating this predicted increment from the current rollout state \(x_t^{\mathrm{roll}}\):
\begin{equation}    
x_{t+1}^{\mathrm{roll}} = \mathcal{I}(x_t^{\mathrm{roll}}, \widehat{\Delta x}_t).
\label{eq:rollout_eqn}
\end{equation}
where \(\mathcal{I}(\cdot)\) performs additive integration on position and speed, and wrapped integration on heading:
\begin{equation}
\mathcal{I}(x_t,\Delta x_t)=
\begin{bmatrix}
p_x + \Delta p_x \\
p_y + \Delta p_y \\
\mathrm{wrap}(\theta + \Delta \theta) \\
v + \Delta v
\end{bmatrix}.
\end{equation}
The rollout history is then updated by appending \(\widehat{\Delta x}_t\) and dropping the oldest increment.

ULR is trained using only prediction loss. Over a rollout of length \(U\), we define the increment loss
\begin{equation}
\mathcal{L}_{\Delta x}
=
\frac{1}{U}\sum_{u=0}^{U-1}
\left\|
\mathrm{norm}(\widehat{\Delta x}_{t+u}) -
\mathrm{norm}(\Delta x_{t+u}^{\mathrm{gt}})
\right\|_2^2,
\end{equation}
and the state loss
\begin{equation}
\begin{aligned}
\mathcal{L}_{\mathrm{state}}
&=
\frac{1}{U}\sum_{u=0}^{U-1}
\frac{1}{w_t + w_{t+1}}
\Big(
w_t\,\ell_{\mathrm{state}}(x_{t+u}^{\mathrm{roll}}, x_{t+u}^{\mathrm{gt}})
\\
&\qquad\qquad\qquad\qquad
+
w_{t+1}\,\ell_{\mathrm{state}}(x_{t+u+1}^{\mathrm{roll}}, x_{t+u+1}^{\mathrm{gt}})
\Big).
\end{aligned}
\end{equation}
where \(\ell_{\mathrm{state}}\) is a normalized state error with wrapped heading difference. The overall ULR training objective is
\begin{equation}
\mathcal{L}_{\mathrm{pred}}
=
\mathcal{L}_{\Delta x}
+
\alpha_{\mathrm{state}}\,\mathcal{L}_{\mathrm{state}}.
\end{equation}

\subsection{Latent Pullback Geometry from a Kinematics-Aware Riemannian Metric}

The unconstrained model above optimizes prediction accuracy only. It does not explicitly distinguish between decoded motion that is compatible with local vehicle kinematics and decoded motion that contains excessive lateral drift. To inject such a structure without changing the backbone architecture, we define a motion-aware metric in decoded state space and pull it back to the latent manifold through the decoder Jacobian.

For a decoded state
\begin{equation}
\hat{x}_t = [\hat p_x,\; \hat p_y,\; \hat \theta_t,\; \hat v_t]^\top,
\end{equation}
we define a local feasible tangent subspace in decoded state space. Let
\begin{equation}
g_1(\hat{x}_t)=
\begin{bmatrix}
\cos \hat{\theta}_t\\
\sin \hat{\theta}_t\\
0\\
0
\end{bmatrix},
\qquad
g_2=
\begin{bmatrix}
0\\
0\\
1\\
0
\end{bmatrix},
\qquad
g_3=
\begin{bmatrix}
0\\
0\\
0\\
1
\end{bmatrix}.
\end{equation}
The vector \(g_1\) corresponds to forward translation in the current heading direction, while \(g_2\) and \(g_3\) correspond to heading change and speed change, respectively. Let $P_{\mathrm{feas}}(\hat{x}_t)$ denote the orthogonal projector onto
\begin{equation}
    \mathrm{span}\{g_1(\hat{x}_t), g_2, g_3\}.
\end{equation}
We then define the anisotropic decoded-space metric
\begin{equation}
M_\beta(\hat{x}_t) = I + \beta\big(I - P_{\mathrm{feas}}(\hat{x}_t)\big),
\qquad \beta > 0.
\end{equation}
For \(\beta>0\), \(M_\beta(\hat{x}_t)\) is symmetric positive definite, and therefore defines a kinematics-aware anisotropic Riemannian metric in decoded state space. This metric assigns additional cost to motion outside the feasible subspace, which in the present setting corresponds to the decoded lateral direction.

Let \(J_\psi(z_t)\) denote the Jacobian of the decoder. Pulling the decoded-space metric back through the decoder yields a latent-space metric
\begin{equation}    
G_\beta(z_t) = J_\psi(z_t)^\top M_\beta(\hat{x}_t)\, J_\psi(z_t).
\end{equation}
Thus, the decoder pulls the decoded-space Riemannian metric back to latent space, endowing latent transitions with a state-dependent geometric cost. In this sense, DPR does not merely penalize decoded motion heuristically; it equips latent transitions with a pullback geometry induced by the vehicle-motion metric in decoded space.

\subsection{Euclidean Pullback Regularization and Directional Pullback Regularization}

For a latent update \(\Delta z_t\) at latent state \(z_t\), the decoder induces the decoded tangent
\begin{equation}
u_t = \frac{J_\psi(z_t)\Delta z_t}{\Delta t}.
\end{equation}
We regularize this tangent using the pullback metric. The per-step metric loss is
\begin{equation}
\ell_{\mathrm{metric}}(t)
=
\left(\frac{J_\psi(z_t)\Delta z_t}{\Delta t}\right)^\top
M_\beta(\hat{x}_t)
\left(\frac{J_\psi(z_t)\Delta z_t}{\Delta t}\right),
\end{equation}
or equivalently,
\begin{equation}
\ell_{\mathrm{metric}}(t)
=
\left(\frac{\Delta z_t}{\Delta t}\right)^\top
G_\beta(z_t)
\left(\frac{\Delta z_t}{\Delta t}\right).
\end{equation}
Averaging over the rollout gives
\begin{equation}
\mathcal{L}_{\mathrm{metric}}
=
\frac{1}{U}\sum_{u=0}^{U-1}\ell_{\mathrm{metric}}(t+u).
\end{equation}
The overall training objective is then
\begin{equation}
\mathcal{L}
=
\mathcal{L}_{\mathrm{pred}}
+
\lambda_{\mathrm{reg}}\,\mathcal{L}_{\mathrm{metric}}.
\end{equation}

This formulation yields two regularized variants of the baseline:

\paragraph*{Euclidean Pullback Regularization (EPR)}
When \(\beta = 0\), the decoded-space metric reduces to
\begin{equation}
M_0(\hat{x}_t)=I,
\end{equation}
and therefore
\begin{equation}
G_0(z_t)=J_\psi(z_t)^\top J_\psi(z_t).
\end{equation}
In this case, the regularizer penalizes the overall decoded tangent magnitude without imposing any directional preference. EPR is therefore the Euclidean pullback ablation of the method.

\paragraph*{Directional Pullback Regularization (DPR)}
When \(\beta > 0\), the metric assigns additional cost to the component of decoded motion orthogonal to the feasible subspace. This is the full proposed method. In DPR, the latent manifold is endowed with an anisotropic pullback geometry that penalizes latent transitions according to the local geometry of feasible vehicle motion.

The resulting three-way decomposition is therefore:
\begin{itemize}
    \item \textbf{ULR}: unconstrained latent rollout, with prediction loss only;
    \item \textbf{EPR}: Euclidean pullback regularization, corresponding to \(\beta=0\);
    \item \textbf{DPR}: directional pullback regularization, corresponding to \(\beta>0\).
\end{itemize}
This decomposition is important because it allows us to distinguish gains due to generic smoothing from gains specifically for imposing a directional geometric penalty on decoded lateral motion.

\subsection{Interpretation}

The proposed method is a lightweight geometric bias on latent rollout rather than a hard kinematic model. The encoder, latent dynamics, decoder, and rollout mechanism remain unchanged. What changes is the geometry used to score latent transitions during training. ULR learns by prediction loss alone. EPR penalizes the overall size of the decoded tangent. DPR goes further by shaping the latent manifold with a pullback metric derived from a motion-aware decoded-space metric, thereby making latent directions that decode to lateral motion more expensive. In this sense, DPR encourages motion-consistent rollouts by acting directly on the geometry of latent transitions.

\subsection{Local Lateral-Drift Bound}
This result shows that up to the linearization error of the decoder, reducing the metric loss controls the decoded lateral component of the rollout increment. This local bound is intended to clarify the mechanism of DPR rather than to establish global rollout accuracy or closed-loop stability.

For a rollout segment of length \(H\), define
\begin{equation}
\bar{\mathcal L}_{\mathrm{metric}}^{(H)}
:=
\frac{1}{H}\sum_{t=1}^{H}\ell_{\mathrm{metric}}(t).
\end{equation}

\begin{proposition}
Assume \(J_\psi\) is \(L_J\)-Lipschitz and that
\(\|\Delta z_t\|\le \rho\) for \(t=1,\dots,H\).
Let
\begin{equation}
u_t := \frac{J_\psi(z_t)\Delta z_t}{\Delta t},
\qquad
\tilde u_t := \frac{\psi(z_t+\Delta z_t)-\psi(z_t)}{\Delta t},
\end{equation}
and define the local lateral direction as
$n_t = \begin{bmatrix} -\sin\hat{\theta}_t & \cos\hat{\theta}_t & 0 & 0 \end{bmatrix}^\top$, $\hat x_t = \psi(z_t).$ Then, we have
\begin{equation}
\frac{1}{H}\sum_{t=1}^{H}(n_t^\top \tilde u_t)^2
\le
\frac{2}{\beta}\,\bar{\mathcal L}_{\mathrm{metric}}^{(H)}
+
\frac{L_J^2\rho^4}{2\Delta t^2}.
\end{equation}
Consequently,
\begin{equation}
\sum_{t=1}^{H}
\left|
n_t^\top\bigl(\psi(z_t+\Delta z_t)-\psi(z_t)\bigr)
\right|
\le
H\Delta t
\sqrt{
\frac{2}{\beta}\,\bar{\mathcal L}_{\mathrm{metric}}^{(H)}
+
\frac{L_J^2\rho^4}{2\Delta t^2}
}.
\end{equation}
\end{proposition}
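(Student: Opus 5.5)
The plan is to reduce everything to a pointwise inequality at each step \(t\), average it over the segment, and then apply Cauchy--Schwarz for the second claim. There are three ingredients: the exact structure of \(M_\beta\), a first-order Taylor remainder bound for \(\psi\), and the elementary inequality \((a+b)^2\le 2a^2+2b^2\).

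\emph{Step 1 (structure of the metric).} The vectors \(g_1(\hat x_t),g_2,g_3\) are orthonormal in \(\mathbb{R}^4\), since \(\|g_1\|=1\) and the supports are disjoint. Hence \(P_{\mathrm{feas}}(\hat x_t)=g_1g_1^\top+g_2g_2^\top+g_3g_3^\top\). The orthogonal complement of their span is one-dimensional and spanned by the unit vector \(n_t\), because \(n_t^\top g_1=-\sin\hat\theta_t\cos\hat\theta_t+\cos\hat\theta_t\sin\hat\theta_t=0\). Therefore \(I-P_{\mathrm{feas}}(\hat x_t)=n_tn_t^\top\) and
\begin{equation*}
\ell_{\mathrm{metric}}(t)=u_t^\top M_\beta(\hat x_t)u_t=\|u_t\|^2+\beta\,(n_t^\top u_t)^2\ \ge\ \beta\,(n_t^\top u_t)^2 .
\end{equation*}

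\emph{Step 2 (linearization error).} Write \(\psi(z_t+\Delta z_t)-\psi(z_t)=\int_0^1 J_\psi(z_t+s\Delta z_t)\Delta z_t\,ds\). The Lipschitz assumption on \(J_\psi\) then gives
\begin{equation*}
\bigl\|\psi(z_t+\Delta z_t)-\psi(z_t)-J_\psi(z_t)\Delta z_t\bigr\|\le \int_0^1 L_J s\|\Delta z_t\|^2ds=\tfrac{L_J}{2}\|\Delta z_t\|^2\le \tfrac{L_J\rho^2}{2}.
\end{equation*}
Since \(\|n_t\|=1\), it follows that \(|n_t^\top(\tilde u_t-u_t)|\le L_J\rho^2/(2\Delta t)\).

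\emph{Step 3 (combine and average).} Split \(n_t^\top\tilde u_t=n_t^\top u_t+n_t^\top(\tilde u_t-u_t)\) and use \((a+b)^2\le 2a^2+2b^2\) together with Steps 1 and 2:
\begin{equation*}
(n_t^\top\tilde u_t)^2\le \tfrac{2}{\beta}\ell_{\mathrm{metric}}(t)+2\Bigl(\tfrac{L_J\rho^2}{2\Delta t}\Bigr)^2=\tfrac{2}{\beta}\ell_{\mathrm{metric}}(t)+\tfrac{L_J^2\rho^4}{2\Delta t^2}.
\end{equation*}
Averaging over \(t=1,\dots,H\) gives the first inequality directly, with exactly the stated constants.

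\emph{Step 4 (consequence).} Set \(a_t=n_t^\top(\psi(z_t+\Delta z_t)-\psi(z_t))=\Delta t\,n_t^\top\tilde u_t\). By Cauchy--Schwarz,
\begin{equation*}
\sum_{t=1}^H|a_t|\le\sqrt{H\sum_t a_t^2}=H\Delta t\sqrt{\tfrac1H\sum_t(n_t^\top\tilde u_t)^2}.
\end{equation*}
Inserting the first bound yields the second claim.

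The only delicate point is Step 1. There I must verify that the feasible span is exactly three-dimensional, so that its complement is spanned by \(n_t\) alone. I must also confirm that the projector is evaluated at the same \(\hat x_t=\psi(z_t)\) that defines \(n_t\). Once that identification is made, the rest is routine bookkeeping of constants.
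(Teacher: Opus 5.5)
Your proof is correct and follows the paper's argument step for step. Both use the identity $I-P_{\mathrm{feas}}(\hat x_t)=n_tn_t^\top$, the remainder bound $\|r_t\|\le L_J\rho^2/(2\Delta t)$, the inequality $(a+b)^2\le 2a^2+2b^2$, averaging, and Cauchy--Schwarz; the only difference is that you verify the projector identity and derive the remainder bound via the integral form of Taylor's theorem, whereas the paper simply asserts both.
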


\begin{proof}
Since \(I-P_{\mathrm{feas}}(\hat x_t)=n_tn_t^\top\), the decoded-space metric satisfies $M_\beta(\hat x_t)=I+\beta n_tn_t^\top$, and hence
\begin{equation}
\ell_{\mathrm{metric}}(t)
=
u_t^\top M_\beta(\hat x_t)u_t
=
\|u_t\|^2+\beta(n_t^\top u_t)^2
\ge
\beta(n_t^\top u_t)^2.
\end{equation}
Now write the true decoded increment velocity as
\begin{equation}
\tilde u_t = u_t + r_t,
\qquad
r_t :=
\frac{\psi(z_t+\Delta z_t)-\psi(z_t)-J_\psi(z_t)\Delta z_t}{\Delta t}.
\end{equation}
Because \(J_\psi\) is \(L_J\)-Lipschitz and \(\|\Delta z_t\|\le \rho\),
the first-order remainder obeys
\begin{equation}
\|r_t\|
\le
\frac{L_J\rho^2}{2\Delta t}.
\end{equation}
Therefore
\begin{equation}
(n_t^\top\tilde u_t)^2
\le
2(n_t^\top u_t)^2 + 2\|r_t\|^2
\le
\frac{2}{\beta}\ell_{\mathrm{metric}}(t)
+
\frac{L_J^2\rho^4}{2\Delta t^2}.
\end{equation}
Averaging over \(t\) gives the first claim. Summing the stepwise lateral displacements and applying Cauchy--Schwarz yields the second bound.
\end{proof}

\begin{figure}[t]
  \centering
  \includegraphics[width=\columnwidth]{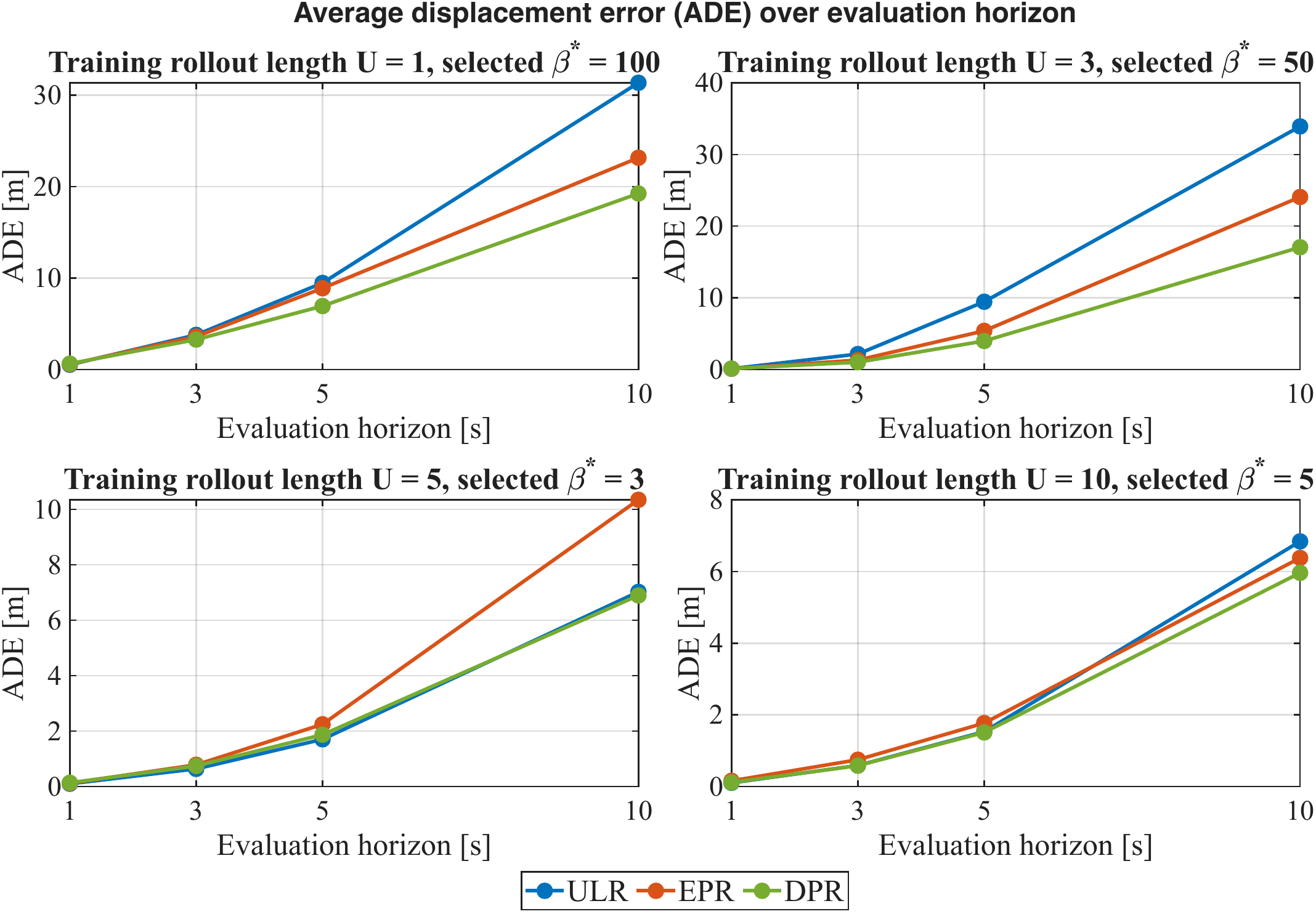}
  \caption{Average displacement error over the evaluation horizon for the three compared models: the unconstrained baseline (ULR), the Euclidean pullback ablation (EPR), and the proposed directional pullback model (DPR). Each subplot corresponds to a different training rollout length \(U \in \{1,3,5,10\}\), and DPR is shown with the validation-selected directional weight \(\beta^*\) for that training regime. Across all training rollout lengths, DPR reduces long-horizon displacement error relative to both ULR and EPR, with the strongest gains appearing for longer self-fed rollout training.}
  \label{fig:adevshorizon}
\end{figure}

\section{Experiments}
\subsection{Experimental Setup}
We evaluate the proposed pullback-regularized model on long-horizon open-loop ego-motion rollout prediction. The physical state is $x_t = [p_x,\; p_y,\; \theta,\; v]^\top$, where \((p_x,p_y)\) is planar position, \(\theta\) is heading, and \(v\) is speed. The model input consists of the previous \(K=4\) state increments, the current state, and the local road curvature. All models use the same encoder--dynamics--decoder backbone and the same latent dimension \(d_z=64\). The backbone uses a lightweight feedforward architecture consisting of a two-layer MLP encoder (\(K\!\cdot\!4+4+1 \rightarrow 256 \rightarrow d_z\)), a two-layer MLP latent dynamics model (\(d_z+1 \rightarrow 256 \rightarrow d_z\)), and a two-layer MLP decoder (\(d_z \rightarrow 128 \rightarrow 4\)), with \(d_z=64\). The only difference between models is the regularization applied during training. We intentionally use a lightweight shared backbone to isolate the effect of the proposed geometric regularizer; evaluating DPR with stronger sequence architectures is an important direction for future work.

All experiments are conducted on ego-motion trajectories derived from the NVIDIA open-source driving dataset \cite{nvidiaavdataset}. We use the released egomotion streams and construct trajectory windows from the physical state and its corresponding increment
\[
\Delta x_t = [\Delta p_x,\; \Delta p_y,\; \Delta \theta,\; \Delta v]^\top.
\]
The raw egomotion timestamps are irregular, so all sequences are resampled to a uniform rate of \(10\) Hz before preprocessing. We then extract fixed-length windows of \(20\) seconds, yielding \(200\) steps per trajectory window, and compute the local road curvature at each step as an additional model input. We use the standard training /validation / test split provided with the data set, consisting of \(126{,}839\) training clips, \(65{,}971\) validation clips, and \(35{,}175\) test clips. All quantitative results reported in this paper are computed on test windows that are disjoint from the training and validation data.
We compare three model variants:
\begin{enumerate}
    \item \textbf{ULR} (\emph{Unconstrained Latent Rollout}): the baseline model trained only with prediction loss;
    \item \textbf{EPR} (\emph{Euclidean Pullback Regularization}): the pullback-regularized ablation obtained by setting \(\beta=0\), which penalizes the overall magnitude of the decoded tangent without introducing directional preference;
    \item \textbf{DPR} (\emph{Directional Pullback Regularization}): the full proposed method, in which \(\beta>0\) assigns additional cost to the lateral component of the decoded tangent.
\end{enumerate}
\begin{figure*}[t]
  \centering
  \includegraphics[width=\textwidth]{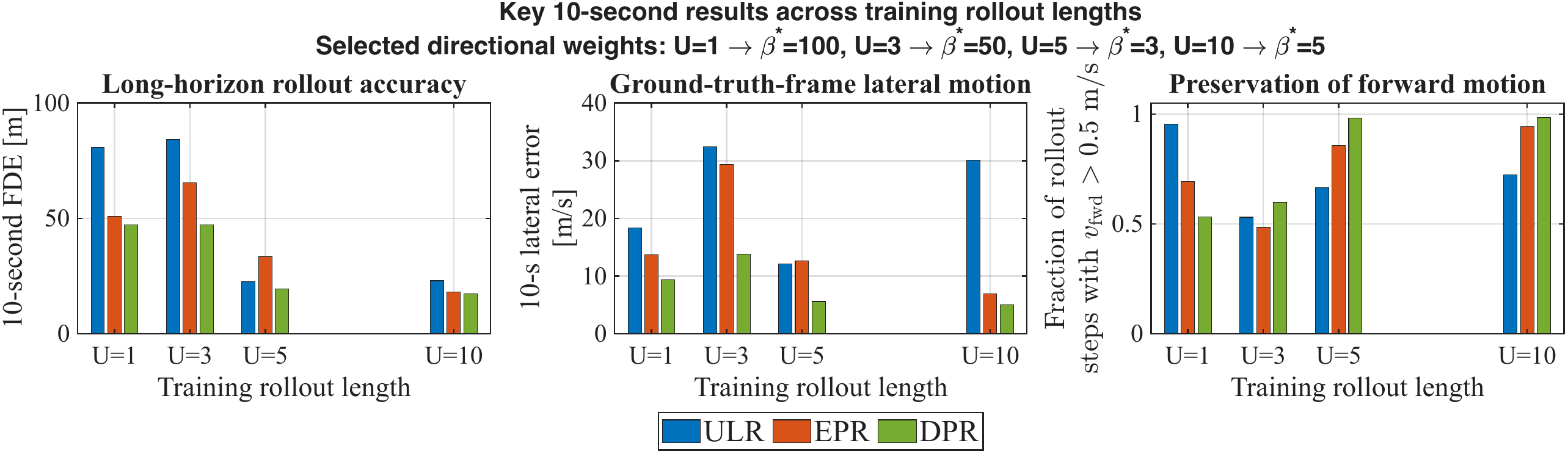}
  \caption{Key 10-second results across training rollout lengths for the unconstrained baseline (ULR), the Euclidean pullback ablation (EPR), and the proposed directional pullback model (DPR). The panels show final displacement error, ground-truth-frame lateral error, and the fraction of rollout steps with forward velocity above \(0.5\,\mathrm{m/s}\). DPR is the only variant that consistently improves 10-second final displacement error across all training rollout lengths, and in the longer-rollout regimes it also maintains or improves the forward-valid fraction while reducing lateral-motion error.}
  \label{fig:key10s}
\end{figure*}
We train separate models with rollout training lengths \(U \in \{1,3,5,10\}\). We chose \(U \in \{1,3,5,10\}\) to span short, intermediate, and long self-fed rollout regimes while keeping the training sweep tractable. Here, \(U\) denotes the number of self-fed training steps used during optimization, not the evaluation horizon. At test time, all models are rolled forward in open loop for \(100\) steps, corresponding to a \(10\)-second horizon with \(\Delta t = 0.1\) s. All test results are reported on the same set of \(2000\) held-out windows.

For each training rollout length, we report one selected value of the directional weight \(\beta\):
\[
\beta^\star_1 = 100,\qquad
\beta^\star_3 = 50,\qquad
\beta^\star_5 = 3,\qquad
\beta^\star_{10} = 5.
\]
Each selected DPR model is compared against both ULR and the EPR ablation.

Our primary evaluation focuses on the quality of open-loop state rollout. To assess both trajectory accuracy and motion consistency, we report the following metrics:
\begin{itemize}
    \item \emph{average displacement error}, defined as the mean Euclidean position error accumulated over the rollout horizon;
    \item \emph{final displacement error}, defined as the Euclidean position error at the final rollout step;
    \item the \(95\)th percentile of the absolute lateral-velocity error, measured in the ground-truth heading frame;
    \item the \(95\)th percentile of the curvature-consistency error, defined as \(|\dot{\theta} - v_{\mathrm{fwd}}\kappa|\);
    \item the median forward velocity over the rollout;
    \item the fraction of rollout steps for which the forward velocity exceeds the minimum forward-motion threshold \(v_{\min}=0.5\) m/s.
\end{itemize}

Lower values are better for displacement error, lateral-motion error, and curvature-consistency error. Higher values are better for median forward velocity and for the fraction of rollout steps with positive forward motion.
\subsection{Main Quantitative Results}
Table~\ref{tab:main_results} summarizes the \(10\)-second open-loop rollout results. Among the regimes tested, the largest relative reduction in 10-second final displacement error appears for the 3-step rollout-trained model. At the same time, the strongest overall control-relevant tradeoff is obtained for longer rollout-trained models, especially at \(U=10\), where DPR combines lower displacement error with improved lateral consistency and forward-motion preservation. The main conclusion is that DPR improves over both baselines across all four training rollout lengths. In particular, for every training rollout length \(U\), the selected DPR model achieves the lowest \(10\)-second final displacement error among the three compared models.

Figure~\ref{fig:adevshorizon} shows the full average displacement error trajectories over the evaluation horizon for ULR, EPR, and DPR across all four training rollout lengths. The one-step training regime \((U=1)\) already shows a large gain in long-horizon rollout error. Relative to ULR, DPR reduces the \(10\)-second mean displacement error from \(31.366\) m to \(19.254\) m and the final displacement error from \(80.646\) m to \(47.172\) m. It also reduces the \(95\)th percentile ground-truth-frame lateral error from \(18.370\) m/s to \(9.395\) m/s. However, this gain comes with a clear forward-motion tradeoff: the median forward velocity drops from \(16.345\) m/s in ULR to \(0.946\) m/s in DPR. Thus, for one-step training, directional pullback regularization substantially improves geometric consistency and final error, but also makes the long-horizon rollout more conservative.

The three-step training regime \((U=3)\) provides a stronger and more balanced result. DPR with \(\beta=50\) reduces the \(10\)-second mean displacement error from \(33.913\) m to \(17.057\) m and the final displacement error from \(84.163\) m to \(47.160\) m. It also reduces the \(95\)th percentile lateral error from \(32.377\) m/s to \(13.748\) m/s and the curvature-consistency error from \(1.868\) rad/s to \(0.902\) rad/s. Unlike the one-step model, the three-step DPR model also improves median forward velocity relative to ULR, from \(1.712\) m/s to \(3.115\) m/s, while increasing the fraction of rollout steps with meaningful forward motion from \(0.530\) to \(0.599\). The five-step and ten-step training regimes yield the strongest overall tradeoffs. For \(U=5\), DPR with \(\beta=3\) improves the \(10\)-second mean displacement error from \(7.022\) m to \(6.894\) m and the final displacement error from \(22.470\) m to \(19.399\) m. More importantly, it dramatically reduces the \(95\)th percentile lateral error from \(12.096\) m/s to \(5.593\) m/s while also improving forward behavior, increasing the median forward velocity from \(6.554\) m/s to \(8.185\) m/s and the forward-valid fraction from \(0.666\) to \(0.980\). For \(U=10\), DPR with \(\beta=5\) gives the strongest overall result among all models tested: the \(10\)-second mean displacement error decreases from \(6.839\) m to \(5.962\) m, the final displacement error decreases from \(22.980\) m to \(17.322\) m, the lateral error drops from \(30.114\) m/s to \(4.971\) m/s, and the median forward velocity increases from \(7.268\) m/s to \(9.189\) m/s. Figure~\ref{fig:key10s} summarizes the key 10-second trends across training rollout lengths, highlighting the tradeoff among final displacement error, lateral consistency, and preservation of forward motion.

Overall, these results show that the pullback metric is effective not only in reducing displacement error, but also in shaping the rollout toward lower lateral drift and better kinematic consistency. The most favorable balance between geometric correction and preservation of forward motion is obtained for the longer self-fed training rollouts \(U=5\) and \(U=10\).
\begin{table*}[t]
\centering
\caption{Ten-second open-loop rollout results for ULR (Unconstrained Latent Rollout), EPR (Euclidean Pullback Regularization), and DPR (Directional Pullback Regularization).}
\label{tab:main_results}
\scriptsize
\setlength{\tabcolsep}{4pt}
\renewcommand{\arraystretch}{1.08}
\begin{tabular}{c l c c c c c c c}
\hline
Train rollout & Model & $\beta$ &
\begin{tabular}[c]{@{}c@{}}10 s mean\\displacement\\error (m)\end{tabular} &
\begin{tabular}[c]{@{}c@{}}10 s final\\displacement\\error (m)\end{tabular} &
\begin{tabular}[c]{@{}c@{}}95th lateral\\error\\(m/s)\end{tabular} &
\begin{tabular}[c]{@{}c@{}}95th curvature\\error\\(rad/s)\end{tabular} &
\begin{tabular}[c]{@{}c@{}}Median\\forward\\velocity (m/s)\end{tabular} &
\begin{tabular}[c]{@{}c@{}}Forward-valid\\fraction\end{tabular} \\
\hline

1  & ULR & --  & 31.366 & 80.646 & 18.370 & 0.683 & 16.345 & 0.953 \\
1  & EPR & 0   & 23.165 & 50.927 & 13.689 & 0.687 & 16.983 & 0.692 \\
1  & \textbf{DPR} & 100 & 19.254 & 47.172 &  9.395 & 0.773 &  0.946 & 0.531 \\
\hline

3  & ULR & --  & 33.913 & 84.163 & 32.377 & 1.868 &  1.712 & 0.530 \\
3  & EPR & 0   & 24.067 & 65.300 & 29.309 & 1.716 & -0.623 & 0.483 \\
3  & \textbf{DPR} & 50  & 17.057 & 47.160 & 13.748 & 0.902 &  3.115 & 0.599 \\
\hline

5  & ULR & --  &  7.022 & 22.470 & 12.096 & 0.257 &  6.554 & 0.666 \\
5  & EPR & 0   & 10.344 & 33.475 & 12.646 & 0.248 &  6.951 & 0.856 \\
5  & \textbf{DPR} & 3   &  6.894 & 19.399 &  5.593 & 0.127 &  8.185 & 0.980 \\
\hline

10 & ULR & --  &  6.839 & 22.980 & 30.114 & 0.325 &  7.268 & 0.722 \\
10 & EPR & 0   &  6.378 & 18.063 &  6.882 & 0.087 &  5.546 & 0.942 \\
10 & \textbf{DPR} & 5   &  5.962 & 17.322 &  4.971 & 0.060 &  9.189 & 0.984 \\
\hline
\end{tabular}
\end{table*}
\subsection{Ablation: Euclidean Versus Directional Pullback Regularization}

A central question is whether the observed gain is due merely to adding any regularization term, or whether the directional lateral penalty is the crucial ingredient. The comparison between EPR and DPR answers this question directly.

EPR already changes the behavior of the latent rollout model because it penalizes the overall magnitude of the decoded tangent. In some cases this is beneficial. For example, when the training rollout length is \(10\), EPR reduces the final displacement error from \(22.980\) m in ULR to \(18.063\) m. However, Euclidean pullback regularization alone is not sufficient. It does not consistently produce the best long-horizon rollout and can even hurt performance. For training rollout length \(5\), EPR increases the \(10\)-second final displacement error from \(22.470\) m to \(33.475\) m, even though DPR reduces it to \(19.399\) m.

The directional term consistently improves on the Euclidean ablation. For training rollout length \(1\), the final displacement error decreases from \(50.927\) m with EPR to \(47.172\) m with DPR, and the \(95\)th percentile lateral error decreases from \(13.689\) m/s to \(9.395\) m/s. For training rollout length \(3\), the final displacement error decreases from \(65.300\) m to \(47.160\) m, and the lateral error decreases from \(29.309\) m/s to \(13.748\) m/s. Notably, for \(U=3\), EPR yields a negative median forward velocity (\(-0.623\,\mathrm{m/s}\)), indicating that backward motion occurs in more than half of the evaluated rollout steps; this failure mode reinforces that isotropic shrinkage alone can distort motion realism and that directional structure is necessary. For training rollout length \(5\), DPR reverses the failure mode of EPR and improves both displacement error and forward behavior. For training rollout length \(10\), DPR again improves over EPR, reducing final displacement error from \(18.063\) m to \(17.322\) m and increasing median forward velocity from \(5.546\) m/s to \(9.189\) m/s.

This ablation is important because it shows that the gains do not come from generic damping alone. The extra penalty on the lateral decoded tangent is the component that makes the regularizer effective. In other words, the model benefits not just from shrinking decoded motion, but from shrinking it in a geometrically meaningful direction.

\subsection{Effect of Training Rollout Length}

The duration of the training rollout changes how the pullback regularizer interacts with long-horizon prediction. One-step training can already produce large reductions in long-horizon displacement error, but the selected DPR model becomes overly conservative in forward motion. As the training rollout length increases, the selected pullback model becomes better at balancing displacement accuracy, lateral consistency, and preservation of forward motion.

This trend is especially clear when comparing the selected DPR models across \(U=1,3,5,10\). The one-step model achieves a large reduction in final displacement error but nearly stalls by the end of the rollout. The three-step model offers a better balance, reducing both long-horizon error and lateral drift while preserving more forward motion. The five-step and ten-step models are stronger still: they improve long-horizon position error, reduce lateral drift substantially, and maintain high fractions of forward-valid rollout steps. Among the models tested, the ten-step DPR model provides the strongest overall \(10\)-second result.

These observations suggest that pullback regularization is most effective when the model is trained under sufficiently long self-fed rollout, so that the geometric penalty acts on the same type of error accumulation that appears at test time. Short rollout training can still benefit from the regularizer, but is more prone to the tradeoff between reduced lateral drift and reduced forward aggressiveness.

\subsection{Summary of Experimental Findings}

The experiments lead to three main findings. First, DPR improves long-horizon rollout over the unconstrained latent baseline ULR. Second, the gain is not explained by Euclidean pullback regularization alone; the directional lateral penalty is necessary for the strongest results. Third, the quality of the final tradeoff depends on the training rollout length: short-rollout training can over-regularize motion, whereas longer self-fed rollout training allows the proposed method to improve displacement accuracy, geometric consistency, and preservation of forward motion simultaneously. Among the regimes tested, \(U=3\) shows the strongest relative improvement in 10-second final displacement error over ULR, while \(U=10\) provides the strongest overall tradeoff among displacement accuracy, lateral consistency, and forward-motion preservation.

\section{Limitations}
Several aspects of this study remain open. First, the directional weight \(\beta\) is selected by validation and its preferred value varies with the training rollout length, indicating that the strength of the geometric bias interacts with the amount of self-fed rollout. Second, stronger lateral-motion control can trade off against preservation of forward motion, particularly for shorter rollout-trained models. Third, the pullback metric acts locally on decoded tangents and does not by itself guarantee global long-horizon accuracy or closed-loop stability. Finally, the experiments are limited to ego-motion rollout; multi-agent interaction, action-conditioned planning, and closed-loop evaluation are left for future work.

\section{Conclusion and Future Work}

This paper introduced \emph{Directional Pullback Regularization} (DPR) for long-horizon latent ego-motion rollout in autonomous driving. By defining an anisotropic metric in decoded state space and pulling it back through the decoder to the latent space, the method assigns a motion-aware geometry to latent transitions without changing the underlying encoder--dynamics--decoder backbone. The resulting experimental comparison among \emph{Unconstrained Latent Rollout} (ULR), \emph{Euclidean Pullback Regularization} (EPR), and DPR shows that the directional pullback term improves long-horizon rollout beyond both the unconstrained baseline and the Euclidean pullback ablation. In particular, the results show that the gains are not explained by generic smoothing alone but by explicitly penalizing latent transitions that decode to lateral vehicle motion. Empirically, the largest relative reduction in 10-second final displacement error appears in the \(U=3\) regime, while the \(U=10\) regime provides the strongest overall control-relevant balance of error reduction, lateral-motion suppression, and forward-motion preservation.

Several directions remain for future work. A natural next step is to develop more principled strategies for selecting or adapting the directional weight \(\beta\). It is also important to extend the present formulation beyond ego-motion rollout to richer multi-agent and action-conditioned settings. Finally, studying how the proposed latent-space geometry affects downstream planning and closed-loop control remains an important direction for connecting improved rollout consistency to decision-making performance.

\end{document}